\documentclass[conf]{new-aiaa}

\usepackage[utf8]{inputenc}

\input{preamble/packages}

\newcommand{\naturals}{\mathbb{N}}

\newcommand{\reals}{\mathbb{R}}
\newcommand{\R}{\reals}
\newcommand{\Rnonneg}{\reals_{\geq 0}}

\renewcommand{\emptyset}{\varnothing}

\newcommand{\norm}[1]{\left\Vert #1 \right \Vert}



\newcommand{\Ccal}{\mathcal{C}}

\newcommand{\Gcal}{\mathcal{G}}

\newcommand{\Pcal}{\mathcal{P}}

\newcommand{\Scal}{\mathcal{S}}

\newcommand{\Ucal}{\mathcal{U}}

\newcommand{\Xcal}{\mathcal{X}}

\newcommand{\eqn}[1]{\begin{align} #1 \end{align}}

\newcommand{\seqn}[2][]{
\begin{subequations}
#1
\begin{align} #2 \end{align}
\end{subequations}
}



\newcommand{\gatekeeper}{\texttt{gatekeeper}}

\newcommand{\nom}{{\rm nom}}
\newcommand{\can}{{\rm can}}
\newcommand{\com}{{\rm com}}
\newcommand{\back}{{\rm bak}}

\newcommand{\rrts}{\textit{RRT*}}


\newcolumntype{g}{>{\columncolor{gray!30}}r}

\sisetup{
    output-decimal-marker = {.},       
    table-format = 3.2,               
    detect-weight = true,             
    detect-family = true              
}


\acrodef{BCH}[BCH]{Baker-Campbell-Hausdorff}
\acrodef{CBF}[CBF]{Control Barrier Function}
\acrodef{CBF-QP}[CBF-QP]{Control Barrier Function Quadratic Program}
\acrodef{CDC}[CDC]{Conference on Decision and Control}
\acrodef{CESDF}[CESDF]{Certified ESDF}
\acrodef{CLF}[CLF]{Control Lyapunov Function}
\acrodef{CVO}[C-VO]{Certified Visual Odometry}
\acrodef{DCT}[DCT]{Discrete Cosine Transform}
\acrodef{DMP}[DMP]{Distance Map Planner}
\acrodef{EKF}[EKF]{Extended Kalman Filter}
\acrodef{ESDF}[ESDF]{Euclidean Signed Distance Field}
\acrodef{EZ}[EZ]{Engagement Zone}
\acrodef{FOV}[FoV]{Field of View}
\acrodef{FPV}[FPV]{First Person View}
\acrodef{GNC}[GNC]{Graduated-Nonconvexity}
\acrodef{GP}[GP]{Gaussian Process}
\acrodef{HOCBF}[HOCBF]{Higher Order Control Barrier Function}
\acrodef{ICCBF}[ICCBF]{Input-Constrained Control Barrier Function}
\acrodef{IEEE}[IEEE]{Institute of Electrical and Electronics Engineers}
\acrodef{IMU}[IMU]{Inertial Measurement Unit}
\acrodef{ISS}[ISS]{Input-to-State}
\acrodef{KF}[KF]{Kalman Filter}
\acrodef{ML}[ML]{Machine Learning}
\acrodef{MPC}{Model Predictive Control}
\acrodef{NGPKF}[NGPKF]{Numerical Gaussian Process Kalman Filter}
\acrodef{ODE}[ODE]{Ordinary Differential Equation}
\acrodef{QP}[QP]{Quadratic Program}
\acrodef{RGBD}[RGBD]{RGB-Depth}
\acrodef{RL}[RL]{Reinforcement Learning}
\acrodef{RoS}[RoS]{Rate of Spread}
\acrodef{SDE}[SDE]{Stochastic Differential Equation}
\acrodef{SDF}[SDF]{Signed Distance Field}
\acrodef{SFC}[SFC]{Safe Flight Corridor}
\acrodef{SLAM}[SLAM]{Simultaneous Localization and Mapping}
\acrodef{SOS}[SOS]{Sum of Squares}
\acrodef{SVD}[SVD]{Singular Value Decomposition}
\acrodef{TCAC}[TCAC]{Technical Committee on Aerospace Controls}
\acrodef{TLS}[TLS]{Truncated Least Squares}
\acrodef{TSDF}[TSDF]{Truncated Signed Distance Field}
\acrodef{TSD}[TSD]{Target Spatial Distribution}
\acrodef{VIO}[VIO]{Visual Inertial Odometry}
\acrodef{VO}[VO]{Visual Odometry}
\acrodef{WLS}[WLS]{Weighted Least Squares}


\usepackage{amsthm}

\theoremstyle{plain}
\newtheorem{theorem}{Theorem}

\newtheorem{lemma}{Lemma}
\newtheorem{problem}{Problem}
\newtheorem{definition}{Definition}

\newtheorem{assumption}{Assumption}
\newtheorem{remark}{Remark}


\theoremstyle{remark}

\AtBeginEnvironment{definition}{\let\oldemph\emph \renewcommand{\emph}[1]{\textbf{#1}}}
\AtEndEnvironment{definition}{\let\emph\oldemph}

\usepackage{cleveref}
\crefformat{problem}{Problem~#2#1#3}
\crefformat{assumption}{Assumption~#2#1#3}

\usepackage{float}
\usepackage{verbatim}
\usepackage{graphicx}
\usepackage{multirow}
\usepackage{amsmath}
\usepackage[version=4]{mhchem}
\usepackage{siunitx}
\usepackage{longtable,tabularx}
\setlength\LTleft{0pt}

\title{Multi-Agent \gatekeeper{}: Safe Flight Planning and Formation Control for Urban Air Mobility}

\author{Marshall Vielmetti\footnote{Masters Student, Department of Electrical \& Computer Engineering, 1301 Beal Ave, Ann Arbor, MI 48109}}
\affil{University of Michigan, Ann Arbor, MI, 48109, USA}
\author{Devansh R. Agrawal\footnote{Postdoc, Department of Robotics, 1320 Beal Ave, Ann Arbor, MI 48109.}}
\affil{University of Michigan, Ann Arbor, MI, 48109, USA}
\author{Dimitra Panagou\footnote{Associate Professor, Department of Robotics and Department of Aerospace Engineering, 2505 Hayward St, Ann Arbor, MI 48109).}}
\affil{University of Michigan, Ann Arbor, MI, 48109, USA}

\begin{document}

\crefname{definition}{Def.}{Defs.}
\crefname{assumption}{assumption}{assumptions}
\Crefname{assumption}{Assumption}{Assumptions}
\crefname{Assumption}{assumption}{assumptions}
\Crefname{Assumption}{Assumption}{Assumptions}

\maketitle

\begin{abstract}
We present Multi-Agent \gatekeeper{}, a framework that provides provable safety guarantees for leader-follower formation control in cluttered 3D environments.
Existing methods face a trad-off: online planners and controllers lack formal safety guarantees, while offline planners lack adaptability to changes in the number of agents or desired formation.
To address this gap, we propose a hybrid architecture where a single leader tracks a pre-computed, safe trajectory, which serves as a shared trajectory backup set for all follower agents. 
Followers execute a nominal formation-keeping tracking controller, and are guaranteed to remain safe by always possessing a known-safe backup maneuver along the leader's path. We formally prove this method ensures collision avoidance with both static obstacles and other agents.
The primary contributions are: (1) the multi-agent \gatekeeper{} algorithm, which extends our single-agent gatekeeper framework to multi-agent systems; (2) the trajectory backup set for provably safe inter-agent coordination for leader-follower formation control; and (3) the first application of the gatekeeper framework in a 3D environment. We demonstrate our approach in a simulated 3D urban environment, where it achieved a 100\% collision-avoidance success rate across 100 randomized trials, significantly outperforming baseline CBF and NMPC methods. Finally, we demonstrate the physical feasibility of the resulting trajectories on a team of quadcopters.
\end{abstract}

\section{Nomenclature}

{\renewcommand\arraystretch{1.0}
\noindent\begin{longtable*}{@{}l @{\quad=\quad} l@{}}
$\Xcal, \Xcal_\text{obs}, \Scal$ & State space, Obstacle space, Safe set\\
$\Ucal$ & Set of admissible control inputs.\\
$\Ccal$ & Backup set\\
$\psi, \gamma, \omega \in \R$ & Heading angle (rad), Climb angle (rad), Turning Rate (rad/s)\\
$\delta \in \Rnonneg$ & Minimum inter-agent separation distance\\
$\epsilon \in \Rnonneg$ & Separation buffer to account for curvature\\
$d_i^* \in \R^d$ & Desired offset of follower agent $i$ from leader\\
$r_i \in \R^d$ & Position component of state of agent $i$.\\
$N \in \naturals$ & Number of agents\\
$\pi^\nom, \pi^\back$ & Nominal controller, Backup controller\\
$p_{k,i}^\nom, u_{k_i,i}^\nom $ & $k$-th Nominal Trajectory of agent $i$.\\
$p_{k,i}^\can, u_{k_i,i}^\nom$ & $k$-th Candidate Trajectory of agent $i$.\\
$p_{k,i}^\com, u_{k_i,i}^\nom$ & $k$-th Committed Trajectory of agent $i$.\\
$p_L, u_L$ & Precomputed leader's trajectory\\ 
$T_H, T_B \in \Rnonneg$ & Planning horizon, backup time horizon\\
\end{longtable*}}

\section{Introduction}\label{section:introduction}
Multi-agent formation control has a wide variety of applications, including
surveillance \cite{xiaMultiAgentReinforcementLearning2022}, exploration \cite{joFoXFormationAwareExploration2024}, and search and rescue \cite{afraziDensityDrivenFormationControl2025}.
However, providing rigorous safety guarantees for complex multi-agent systems operating in real-world environments remains a significant challenge in deploying robotics in sensitive applications. 

Methods for multi-agent formation control can be broadly classified into three categories: (1) leader-follower, (2) virtual structures, and (3) behavior-based methods \cite{leeDecentralizedBehaviorbasedFormation2018,barfootMotionPlanningFormations2004}.
Leader-follower methods designate some agents as "leaders" and the rest as "followers". Leader agents execute some desired trajectory, while followers attempt to maintain some fixed offset from the leader \cite{roldaoLeaderfollowingTrajectoryGenerator2014,xiaoLeaderFollowerConsensusMultiRobot2019}.
These methods are appealing due to their simplicity, but often lack rigorous safety guarantees, as followers may collide with obstacles or each other while attempting to maintain formation \cite{liuSurveyFormationControl2018,wangVisionBasedFlexibleLeader2020}.
Virtual-structure based methods treat the entire formation as a single entity,
and attempt to maintain a rigid formation by controlling the motion of this virtual
structure \cite{kar-hantanVirtualStructuresHighprecision1996,askariUAVFormationControl2015,zhouAgileCoordinationAssistive2018}. These methods
can provide strong formation tracking guarantees, but often struggle to adapt to changes in the number of agents or desired formation, and are difficult to scale to large numbers of agents \cite{OH2015424}.
Behavior-based methods define a set of behaviors for each agent, such as
obstacle avoidance, formation keeping, and goal seeking, and combine these behaviors to generate control inputs \cite{balchBehaviorbasedFormationControl1998}.
These methods are often robust to changes in the environment and number of agents, but can be difficult to tune and may not provide formal safety guarantees \cite{leeDecentralizedBehaviorbasedFormation2018}.
In some cases, it is desirable to pre-plan safe, dynamically feasible trajectories offline, such that they may be executed by agents without having to worry about achieving real time performance. 
Methods like multi-agent RRT* \cite{capMultiagentRRTSamplingbased2013} and other planning techniques \cite{barfootMotionPlanningFormations2004,zhouHybridPathPlanning2022} can be used to generate safe trajectories offline, but these methods fail to scale to large numbers of agents, or high-dimensional state spaces.
Furthermore, if the desired formation or number of agents change, these offline solutions become unusable.

Thus, there is a need for methods that can provide rigorous safety guarantees, while being adaptable to changes in the number of agents or desired formation, and computationally efficient enough to run online in real time.
This gap motivates our approach, multi-agent \gatekeeper{}.
We propose a hybrid online-offline leader-follower approach, where a single path is computed 
offline for a designated leader-agent, and followers attempt to maintain a formation around the leader while deviating to ensure safety.
Our approach provides rigorous theoretical safety guarantees, while being adaptable to changes in the number of agents or desired formation, and is computationally efficient enough to run online in real time.
This paper extends our lab's previous work in online safety verification and control for single-agents, \gatekeeper{} \cite{agrawalGatekeeperOnlineSafety2024}, which we extend to multi-agent systems for the first time, in the context of a distributed, leader-follower formation control problem. Our contributions are as follows:
\begin{enumerate}
    \item We present multi-agent \gatekeeper{}, an adaptation of our single-agent safety framework \gatekeeper{} to multi-agent systems to achieve formal safety guarantees. 
    \item We present the first application of \gatekeeper{} in a 3D environment, demonstrating our approach on a 3D Dubins aircraft model in a dense, urban-like environment.
    \item We apply our method to a leader-follower formation control problem in both simulation and on hardware, demonstrating the ability to maintain formation while ensuring safety, and showing our method achieves safety where baselines fail.
\end{enumerate}

This paper is organized as follows: \Cref{section:preliminaries} introduces the preliminaries and problem statement, \cref{section:proposed_solution} details our proposed solution, \cref{section:simulation_results} describes our simulated experiments and baseline comparisons, \cref{section:hardware_experiments} describes our hardware experiments, and finally conclusions and future directions are discussed in \cref{section:conclusion}.

\section{Preliminaries \& Problem Statement}\label{section:preliminaries}
We will first introduce some preliminaries and notation, then formally state the problem we are trying to solve.
\vspace{-0.5cm}
\subsection{Preliminaries}\label{sec:subsection:preliminaries}

Consider a nonlinear system,
\begin{equation}\label{eq:dynamical_system}
    \dot x = f(x, u)
\end{equation}

where $x \in \Xcal \subseteq \R^n$ is the state and $u \in \Ucal \subseteq \R^m$ is the control input. $f: \Xcal \times \Ucal \to \R^n$ is piecewise continuous and locally Lipschitz in $x$ and $u$.  Let $r \in \R^d$, $d \in \{2, 3\}$ denote the position component of the state $x$.

\begin{definition}[Agent]\label{def:agent}
    An \textbf{agent} $i \in V$ represents a robotic system, which satisfies the following:
    \begin{enumerate}
        \item The agent's state $x_i$ evolves through a system satisfying \eqref{eq:dynamical_system}.
        \item Collision radius $\delta \in \R_{>0}$. Agents must maintain a minimum separation of $\delta$.
        \item Agents can communicate over some undirected graph $\Gcal$ as in \cref{def:undirected_graph}.
    \end{enumerate} 
    Agents can be classified as \emph{leaders} or \emph{followers}.  Leaders execute some desired trajectory, while followers attempt to maintain some fixed offset from the leader.
\end{definition}

Agents attempt to navigate through a known environment, which contains some set of static obstacles $\Xcal_\text{obs} \subset \Xcal$. 
The safe set is defined as $\Scal = \Xcal \setminus \Xcal_\text{obs}$.

\begin{definition}[Trajectory]
    A \textbf{trajectory} is the tuple $(T, p, u)$, where $T \subseteq \R_{\geq 0}$ is a time interval, $p: T \to \Xcal$ is the state trajectory, and $u: T \to \Ucal$ is a control trajectory, which together satisfy the system dynamics,
    \eqn{
        \dot{p}(t) = f(p(t), u(t)), \forall t \in T.
    }
\end{definition}
The position component of the trajectory is denoted $r(t)$, where $r(t)$ is the position component of $p(t)$.

\begin{definition}[Arc Length]\label{def:arc_length}
    Given a trajectory $(T, p, u)$ the \emph{arc length} between times $t_1, t_2 \in T$ is defined as,
    \eqn{
        d(t_1, t_2) &= \int_{t_1}^{t_2} \norm{\dot{r}(\tau)}d\tau
    }
\end{definition}

\begin{definition}[Undirected Graph]\label{def:undirected_graph}
   Let $\Gcal = (V, E)$ be an undirected graph, where $V$ is the set of vertices and $E \subseteq V \times V$ is the set of edges. Each vertex $i \in V$ represents an agent, and an edge $(i, j) \in E$ indicates that agents $i$ and $j$ can communicate with each other.
\end{definition}

\begin{assumption}[Communications]\label{assumption:communications}
    We assume $\Gcal$ is fully connected, and has no communication delay or bandwidth limitations.
\end{assumption}
\begin{remark}
    \Cref{assumption:communications} is adopted to simplify the safety proof. Future work will investigate relaxing this assumption to allow for time-varying graphs, communication delays, and bandwidth constraints.
\end{remark}


\subsection{\gatekeeper{} Preliminaries}
We now introduce definitions and assumptions from \gatekeeper{} \cite{agrawalGatekeeperOnlineSafety2024,agrawalOnlineSafetyMultiple2025}, which forms the basis of our approach.

\begin{definition}[Backup Set]\label{def:backup_set}
    A \emph{backup set}, denoted $\Ccal \subseteq \Scal$ is some subset of the safe set, defined with some feedback controller $\pi^\back: \Xcal \to \Ucal$ (the \emph{backup controller}), such that  the backup controller renders the backup set control-invariant. That is, the closed-loop system $\dot{x} = f(x,\pi^\back(x))$ satisfies
    \eqn{
        x(t_k) \in \Ccal \implies x(t) \in \Ccal, \forall t \geq t_k.
    }
\end{definition}

\begin{definition}[Nominal Trajectory]
From some state $x(t_k) \in \Xcal$ at time $t_k$, a \textbf{nominal trajectory} is the tuple $([t_k, t_k + T_H], p_k^\nom, u_k^\nom)$, where $T_H \in \Rnonneg$ is the planning horizon. 
\end{definition}
Nominal trajectories are often generated by online planners, and may violate safety constraints, or terminate in some unrecoverable state. 

\begin{definition}[Backup Trajectory]\label{def:backup_trajectory}
    Let $T_B \in \Rnonneg$ be finite. For any $t_s \in \Rnonneg$, and $x_s \in \Xcal$, a trajectory $([t_s, \infty), p_k^\back, u_k^\back)$ is a \emph{backup trajectory} if:
    \eqn{
        p_k^\back(t_s + T_B) \in \Ccal,
    }
    and for all $t \geq t_s + T_B$
    \eqn{
        u^\back(t) = \pi^\back(p^\back(t)),
    }
    where $\Ccal \subset S$ is a backup set as in \cref{def:backup_set}, and $\pi^B$ is a backup controller that renders $\Ccal$ control-invariant. 
\end{definition}

\begin{definition}[Candidate Trajectory]\label{def:candidate_trajectory}
    Given a state $x(t_k) \in \Xcal$ at some time $t_k \in \Rnonneg$, a \emph{candidate trajectory} consists of a nominal trajectory, switch time, and backup trajectory. A candidate with switch time $t_s$ is denoted $(p^{\can, t_s}, u^{\can, t_s})$ and defined over $[t_k, \infty)$. The candidate is defined as a piecewise trajectory,
    \eqn{
        p^{\can, t_s}(t), u^{\can, t_s}(t) = 
        \begin{cases}
            p^{\nom} (t), u^\nom (t), & \quad t \in [t_k, t_s)\\
            p^\back (t), u^\back (t), &\quad t \in [t_s, \infty).
        \end{cases}
    }
\end{definition}
A candidate controller is thus defined by an agent executing its nominal trajectory until it reaches the switch time, then a backup controller for all future time.


\subsection{Problem Statement}
We now introduce a formation control problem, where agents attempt to maintain some fixed offset from a designated leader, while 
avoiding obstacles and each other.

\begin{definition}[Formation Control]
Leader-follower \textbf{formation control} is defined by designating one agent as a leader, and the rest as followers. Each follower agent $i$ then attempts to maintain some fixed offset $d_i^*$ from the leader, while the leader executes its desired trajectory.
\end{definition}

\begin{definition}[Formation Error]
Given a formation control problem with $i=1\dots N-1$ follower agents attempting to maintain some fixed displacement $d_i^* \in \R^d$ relative to a designated leader agent, we can write the \textbf{formation error} at time $t_k$ as
\begin{equation}
    E(t_k) = \sum\limits_{i=1}^{N-1} \norm{(r_{L}(t_k) - r_{i}(t_k)) - d_i^*}^2,
\end{equation}
where $r_{L}(t_k)$ is the position of the leader agent at time $t_k$, $r_{i}(t_k)$ is the position of the follower agent, and $d_i^*$ is the desired displacement of the follower agent.
\end{definition}
We will seek to minimize formation error of follower agents, but prioritize deviations from the nominal formation keeping controller to maintain safety.

\begin{problem}[Formation Control Problem]\label{problem:multi_agent_formation_control}
Consider a set $V$ of $N$ homogeneous agents defined as in \cref{def:agent}, consisting of one designated leader and $N-1$ followers.
Agents navigate a known environment, and must remain within the safe set $\Scal$ as defined in \cref{sec:subsection:preliminaries}, and avoid inter-agent collisions.
Agents communicate over an undirected graph $\Gcal = (V, E)$ satisfying \cref{assumption:communications}.

The leader agent executes a pre-computed, safe, dynamically feasible trajectory $([t_0, t_f], p_L, u_L)$ while each follower $i$ seeks to track a fixed offset $d_i^*$ relative to the leader.
We aim to minimize the total formation error $E(t)$ over the mission horizon $[t_0, t_f]$, while ensuring safety constraints are satisfied. Formally, the global formation control problem is stated as,
\seqn{
    \min & \quad \int_{t_0}^{t_f} E(t) \; dt\\
    \text{s.t.} & \quad x_i(t) \in \mathcal{S}, \quad i=1,\ldots,N, \; t \in [t_0, t_f]\\
    & \quad \norm{r_i(t) - r_j(t)} \geq \delta, \quad \forall i \neq j, \; t \in [t_0, t_f]\\
    & \quad \dot x_i(t) = f(x_i(t), u_i(t)), \quad i=1,\ldots,N, \; t \in [t_0, t_f]
}
\end{problem}
\vspace{-0.5cm}
\section{Proposed Solution}\label{section:proposed_solution}
We propose multi-agent \gatekeeper{}, an extension of our single-agent safety framework \gatekeeper{} \cite{agrawalGatekeeperOnlineSafety2024} to multi-agent systems. While the original framework provides formal safety guarantees in for single agents, even in the presence of dynamic obstacles, this extension addresses the challenge of avoiding other agents. 

Our key insight is that inter-agent communication allows agents to treat other agents as predictable, time-varying obstacles. By sharing committed trajectories, agents can certify candidates as safe in a distributed manner. 

We first present the general multi-agent \gatekeeper{} algorithm in \cref{sec:subsection:multi_agent_gatekeeper}. We then apply this algorithm to the leader-follower formation control \cref{problem:multi_agent_formation_control} in \cref{sec:subsection:formation_control_with_multi_agent_gatekeeper}, leveraging the leader's path as a shared backup maneuver to provide formal proofs of safety in \cref{sec:subsection:backup_trajectory_generation_and_coordination}.

\subsection{Multi-Agent gatekeeper{}}\label{sec:subsection:multi_agent_gatekeeper}
Each follower agent $i$ executes multi-agent \gatekeeper{} independently (\cref{alg:multi_agent_gatekeeper}) at discrete iterations $k_i \in \naturals$, which occur at times $t_{k_i}$, such that $t_{k_i} < t_{k_i + 1}$. Our method is \emph{computationally distributed and asynchronous} (agents plan locally) but \emph{informationally centralized} (agents require the committed trajectories of all other agents).

At every iteration $k_i$, agent $i$ performs the following:
\begin{enumerate}
    \item \textbf{Receive} current committed trajectories of all other agents, $\Pcal^\com$.
    \item \textbf{Construct} a goal-oriented nominal trajectory $([t_{k_i,i}, t_{k_i,i} + T_H], p_{k_i,i}^\nom, u_{k_i,i}^\nom)$ over the horizon $T_H$.
    \item \textbf{Attempt} to construct a candidate trajectory $([t_{k_i,i}, \infty), p_{k_i,i}^\can, u_{k_i,i}^\can)$ as in \cref{def:candidate_trajectory} using \cref{alg:construct_candidate}.
    \item \textbf{Commit} to the candidate trajectory if valid (\cref{def:valid}), otherwise, continue the previous committed trajectory.
\end{enumerate}
\begin{algorithm}[H]
    \caption{Multi-Agent \gatekeeper{} ( Agent $i$ at time $t_{k_i}$)}
    \label{alg:multi_agent_gatekeeper}
    \DontPrintSemicolon
    
    $\Pcal_\com \leftarrow p_{k_j,j}^\com, \; \forall j \neq i$    \tcp*[R]{Receive other committed trajectories.}

    $([t_{k_i,i}, t_{k_i,i} + T_H], p_i^\nom, u_i^\nom) \leftarrow (x_i^\nom(t), \pi_i^\nom(x_i^\nom(t)))$  \tcp*[R]{Propagate Nominal.}

    $([t_{k_i,i}, \infty), p_i^{\can, t_s}, u_i^{\can, t_s}) \leftarrow$ \textsc{ConstructCandidate}$(p_i^\nom, u_i^\nom, \Pcal_\com)$ \tcp*[R]{Try to construct candidate.}

    \eIf{$([t_{k_i,i}, \infty), p_i^{\can, t_s}, u_i^{\can, t_s}) \neq $ null }{
        $([t_{k_i,i}, \infty), p_{k_i,i}^\com, u_{k_i,i}^\com) \leftarrow ([t_{k_i,i}, \infty), p_i^{\can, t_s}, u_i^{\can, t_s})$ \tcp*[R]{Commit to valid candidate.}
    }{
        $([t_{k_i,i}, \infty), p_{k_i,i}^\com, u_{k_i,i}^\com) \leftarrow ([t_{k_i,i}, \infty), p_{k_i-1,i}^\com, u_{k_i-1,i}^\com)$ \tcp*[R]{Failure: continue previous.}
    }
\end{algorithm}

\begin{assumption}[Sequential Consistency]\label{assumption:no_simultaneous_iterations}
    We assume the planning iterations of agents are non-overlapping. That is, for agents $i, j$, the planning iterations $k_i, k_j$ occur at distinct times $t_{k_i} \neq t_{k_j}$ for all $i \neq j$. Furthermore, we assume the computation time $\Delta t_i$ is negligible with respect to the system dynamics, such that the state of the system remains effectively constant during planning iterations.
\end{assumption}
\begin{remark}
    This assumption guarantees sequential consistency, preventing race conditions where multiple agents simultaneously attempt to update their committed trajectories based on outdated information. 
    In practice, this is enforced by a Time-Division Multiple Access (TDMA) communication scheme, where each agent is assigned a unique time slot to perform planning iterations if necessary.
    This is also a common assumption in the literature, e.g. \cite{tordesillasMADERTrajectoryPlanner2022}.
\end{remark}

We will now define the notions of validity and a committed trajectory, then the multi-agent \gatekeeper{} algorithm. 
\begin{definition}[Valid]\label{def:valid}
    The $k$-th candidate trajectory of agent $i$, denoted $([t_{k_i,i}, \infty),p_{k_i,i}^\can, u_{k_i,i}^\can)$, is considered \emph{valid} if,

    \noindent 1) it remains in the safe set,
    \eqn{
        p_{k_i, i}^\can(t) \in \Scal, \forall t \geq t_{k_i},
    }
    2) reaches a backup set $\Ccal_{k_i,i}$ at some finite time $t_{k_i} + t_s + T_B$, and remains there for all future time,
    \eqn{
        p_{k_i,i}^\can (t) \in \Ccal_{k_i,i}, \forall t \geq t_s + T_B,
    }
    3) and is collision free with respect to the committed trajectories of all other agents,
    \eqn{
        \norm{p_{k_i,i}^\can(t) - p_{k_j,j}^\com(t)} \geq \delta, \forall j \neq i, \forall t \geq t_{k_i}.
    }
\end{definition}

\begin{definition}[Committed Trajectory]\label{def:committed_trajectory}
    At the $k$th iteration of agent $i$, let the agent construct some goal-oriented nominal trajectory $[t_{k_i,i}, t_{k_i,i} + T_H], (p_{k_i,i}^\nom, u_{k_i,i}^\nom)$.
    Define the set of all valid switch times,
    \eqn{
        \mathcal{I}_{k_i,i} = \{t_S \in [0, T_H] \mid ([t_{k_i,i}, \infty), p_{k_i,i}^{\can, t_s}, u_{k_i,i}^{\can, t_s}) \text{ is valid}\}, 
    }
    where $([t_{k_i,i}, \infty), p_{k_i,i}^{\can, t_s}, u_{k_i,i}^{\can, t_s})$ is the candidate trajectory with switch time $t_s$ as in \cref{def:candidate_trajectory}, and validity is determined by \cref{def:valid}.
    If $\mathcal{I}_{k_i,i} \neq \emptyset$, let $t_s = \max (\mathcal{I}_{k_i,i})$.
    The agent then commits the candidate trajectory with switch time $t_s$,
    \eqn{
        ([t_{k_i,i}, \infty), p_{k_i,i}^\com, u_{k_i,i}^\com) = ([t_{k_i,i}, \infty), p_{k_i,i}^{\can, t_s}, u_{k_i,i}^{\can, t_s}),
    } 
    and thus must execute this trajectory until a new trajectory is committed.
    If $\mathcal{I}_{k_i,i} = \emptyset$, the agent continues executing its previously committed trajectory, $([t_{k_i,i}, \infty), p_{k_i,i}^\com, u_{k_i,i}^\com) = ([t_{k_i-1,i}, \infty), p_{k_i-1,i}^\com, u_{k_i-1,i}^\com)$.
\end{definition}

\subsection{Formation Control With Multi-Agent \gatekeeper{}}\label{sec:subsection:formation_control_with_multi_agent_gatekeeper}
We now discuss the specific application of multi-agent \gatekeeper{} to the leader-follower formation control \cref{problem:multi_agent_formation_control}.
Our key insight is to leverage the pre-computed leader trajectory as a \emph{shared, guaranteed-safe backup maneuver} for all followers. 
Since the trajectory is known to be safe and dynamically feasible, any follower that merges onto this path and \emph{maintains appropriate separation from other agents} is guaranteed to remain safe for all future time.

\begin{definition}[Trajectory Backup Set]\label{def:trajectory_backup_set}
    Let $([t_0, t_f], p_L, u_L)$ be a safe, dynamically feasible leader trajectory.
    For any state $x$ coinciding with the leader's path at parameter $t_L \in [t_0, t_f]$ such that $x = p_L(t_L)$, and any current time $t_\text{now}$, we define the \emph{trajectory backup controller} as,
    \eqn{
        \pi^\back(x, t) = u_L(t_L + (t - t_\text{now})),
    }
    Executing this controller ensures the agent tracks the leader's path $p_L$ starting from $t_L$, effectively rendering the leader's trajectory a backup set.
\end{definition}

However, due to the risk of inter-agent collisions, we must still verify that the trajectory backup set is safe with respect to other agents before it can be committed. 

This requires us to pose the following requirements on the leader's trajectory.
\begin{definition}[Valid Trajectory Backup Set]\label{def:valid_trajectory_backup_set}
    A trajectory $([t_0, t_f], p_L(t), u_L(t))$ is a \emph{valid trajectory backup set} if,
    \noindent
    \textbf{1)} It is safe with respect to static obstacles,
    \eqn{
        p_L(t) \in \Scal, \forall t \in [t_0, t_f],
    }
    \textbf{2)} It is dynamically feasible, and
    \textbf{3)} there exists a curvature margin $\epsilon \in \Rnonneg$ such that for all path parameters $t_1, t_2 \in [t_0, t_f]$ and future offsets $\tau \geq 0$,
    \eqn{
        d(t_1, t_2) \geq \delta + \epsilon \implies \norm{r_L(t_1 + \tau) - r_L(t_2 + \tau)} \geq \delta \label{eq:epsilon_condition}
    }
    where $d(\cdot, \cdot)$ is the arc length along the leader's trajectory as in \cref{def:arc_length}.
\end{definition}
The condition \cref{eq:epsilon_condition} accounts for the curvature of the leader's trajectory. It ensures that an arc length separation of $\delta + \epsilon$ guarantees at least a safe $\delta$ of Euclidian distance for all future time.
In practice, a conservative bound on $\epsilon$ can be computed by sampling the leader path and computing pairwise Euclidean distances over bounded lookahead windows.

\begin{lemma}[Validity of a Trajectory Backup Set]\label{lemma:valid_trajectory_backup_set}
    Let the leader trajectory $([t_0, t_f], p_L(t), u_L(t))$ be a valid trajectory backup set by \cref{def:valid_trajectory_backup_set}. 
    Suppose agents $i$ and $j$ begin to execute $\pi^\back$ from the leader path at times $t_i$ and $t_j$ respectively, at path parameters $t_{Li}$ and $t_{Lj}$ such that,
    \eqn{
        p_i(t_i) = p_L(t_{Li}), \quad p_j(t_j) = p_L(t_{Lj}).
    }

    If the arc-length separation between the agents at time $t_c = \max(t_i, t_j)$ satisfies,
    \eqn{
        d(t_{Li} + (t_c - t_i), t_{Lj} + (t_c - t_j)) \geq \delta + \epsilon \implies \|r_i(t) - r_j(t)\| \geq \delta, \; \forall t \geq t_c,
    }
    i.e. the agents remain safe with respect to each other for all future time.
\end{lemma}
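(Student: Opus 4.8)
The plan is to show that once an agent merges onto the leader's path and applies the backup controller $\pi^\back$, its trajectory coincides \emph{exactly} with a time-shifted copy of the leader's trajectory; the pairwise separation then reduces to the separation of two points sliding along the leader's path with a fixed time offset, and the curvature margin condition \eqref{eq:epsilon_condition} of \cref{def:valid_trajectory_backup_set} finishes the argument in essentially one line.

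First I would establish the path-tracking identity $r_i(t) = r_L(t_{Li} + (t - t_i))$ for all $t \geq t_i$, and symmetrically for $j$. By \cref{def:trajectory_backup_set} the backup controller replays the leader's inputs with a fixed offset, so that $u_i(t) = u_L(t_{Li} + (t - t_i))$. The curve $q(t) = p_L(t_{Li} + (t - t_i))$ then satisfies the same closed-loop initial value problem as $p_i$: both start at $p_L(t_{Li})$ and obey $\dot x = f(x, u_i(t))$, using that $p_L$ is itself a solution of \eqref{eq:dynamical_system}. Since $f$ is locally Lipschitz, solutions are unique, so $p_i \equiv q$, and the identity follows by projecting onto the position component.

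Next I would move to a common time origin. With $t_c = \max(t_i, t_j)$ and path parameters $s_i = t_{Li} + (t_c - t_i)$, $s_j = t_{Lj} + (t_c - t_j)$, the path-tracking identity gives, for every $\tau \geq 0$,
\eqn{
    \norm{r_i(t_c + \tau) - r_j(t_c + \tau)} = \norm{r_L(s_i + \tau) - r_L(s_j + \tau)},
}
because the common shift $\tau$ leaves both offsets unchanged. The hypothesis is exactly $d(s_i, s_j) \geq \delta + \epsilon$, so applying \eqref{eq:epsilon_condition} with $t_1 = s_i$ and $t_2 = s_j$ yields $\norm{r_L(s_i + \tau) - r_L(s_j + \tau)} \geq \delta$ for all $\tau \geq 0$, hence $\norm{r_i(t) - r_j(t)} \geq \delta$ for all $t \geq t_c$.

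I expect the path-tracking identity to be the only real obstacle: everything after it is a direct substitution into \eqref{eq:epsilon_condition}. The delicate points are invoking uniqueness of solutions to \eqref{eq:dynamical_system} (which relies on the local Lipschitz assumption on $f$) to pin each agent \emph{exactly} on the shifted leader path rather than merely near it, and checking that the shifted argument $t_{Li} + (t - t_i)$ stays in the domain of $u_L$ so that the replay is well-defined over the entire horizon.
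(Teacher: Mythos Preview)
Your proposal is correct and follows essentially the same route as the paper: identify each agent with a time-shifted copy of the leader trajectory, then invoke the curvature-margin condition \eqref{eq:epsilon_condition} at the common time $t_c$. The only difference is that you make the path-tracking identity explicit via uniqueness of solutions to \eqref{eq:dynamical_system}, whereas the paper simply asserts that agent $i$ has ``reached parameter $t_{Li} + (t_j - t_i)$'' without justification; your version is the more careful one.
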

\begin{proof}
    WLOG, assume $t_i \leq t_j$, thus $t_c = t_j$. At time $t_c$, agent $j$ has just joined the path at parameter $t_{Lj}$, and agent $i$ has been traveling on the path for $(t_j - t_i)$ seconds, reaching parameter $t_{Li} + (t_j - t_i)$. By assumption, the arc length distance between these two parameters satisfies the condition in \cref{def:valid_trajectory_backup_set}. Thus, for all $\tau \geq 0$,
    \eqn{
        \norm{r_L((t_{Li} + (t_j - t_i)) + \tau) - r_L(t_{Lj} + \tau)} \geq \delta.
    }
    Substituting $\tau = t - t_j$ yields $\norm{r_i(t) - r_j(t)} \geq \delta, \forall t \geq t_j$.
\end{proof}

Therefore, by ensuring that the trajectory backup set satisfies \cref{def:valid_trajectory_backup_set}, we can guarantee that if agents joining the leader's trajectory are spaced sufficiently far apart (satisfy the conditions of \cref{lemma:valid_trajectory_backup_set}), they will remain safe for all future time.

\subsection{Backup Trajectory Generation and Coordination}\label{sec:subsection:backup_trajectory_generation_and_coordination}
While \cref{lemma:valid_trajectory_backup_set} provides the theoretical condition for safety, it does not specify how agents distributively coordinate to claim "slots" on the leader's path, nor how they generate the transition trajectories to reach them. 

\begin{figure}
    \begin{minipage}[t]{0.48\textwidth}
        \begin{algorithm}[H]
            \caption{Construct Candidate}
            \label{alg:construct_candidate}
            \SetKwComment{Comment}{// }{}
            \DontPrintSemicolon
            
            $\mathcal{I}_{valid} \leftarrow \emptyset$\;
            
            \For{$t_s \in [t_k, t_k + T_H]$}{
                $x_s \leftarrow p_i^{nom}(t_s)$\;
                
                \Comment{Plan rejoin leader maneuver}
                $([t_{k_i,i} + t_s, t_{k_i,i} + t_s + T_B], p_i^{join}, u_i^{join}), t_{Li} \leftarrow$ \textsc{PlanJoinToBackup}$(x_s, p_L)$\;
                
                \If{$t_{Li} = $ null}{
                    \textbf{continue}\;
                }
                
                $([t_{k_i,i}, \infty), p_i^{can}, u_i^{can}) \leftarrow (p_i^{nom}|_{[t_k, t_s]}, p_i^{join}, p_L |_{[t_{Li}, \infty)})$\;
                
                \If{\textsc{ValidateCandidate}$(p_i^{can}, u_i^{can}, \mathcal{P}_{com}, t_{Li})$}{
                    $\mathcal{I}_{valid} \leftarrow \mathcal{I}_{valid} \cup \{t_s\}$\;
                }
            }
            
            \eIf{$\mathcal{I}_{valid} \neq \emptyset$}{
                $t_s^* \leftarrow \max(\mathcal{I}_{valid})$\;
                \KwRet candidate with switch time $t_s^*$\;
            }{
                \KwRet null\;
            }
        \end{algorithm}
    \end{minipage}
    \hfill
    \begin{minipage}[t]{0.48\textwidth}
        \begin{algorithm}[H]
            \caption{Validate Candidate}
            \label{alg:validate_candidate}
            \SetKwComment{Comment}{// }{}
            \DontPrintSemicolon
            
            \Comment{Check static obstacle collisions}
            \For{$t \in [t_k, t_{Li}]$}{
                \If{$p_i^{can}(t) \notin \mathcal{S}$}{
                    \KwRet \textbf{false}\;
                }
            }
            
            \Comment{Validate against all other agents' committed trajectories}
            \ForAll{$p_j^{com} \in \mathcal{P}_{com}$}{
                \Comment{Check inter-agent collisions}
                $t_{max} \leftarrow \max(t_{Li}, t_{Lj})$
                \For{$t \in [t_k, t_{max})$}{
                    \If{$\|r_i^{can}(t) - r_j^{com}(t)\| < \delta$}{
                        \KwRet \textbf{false}\;
                    }
                }
                \Comment{Ensure backup slot separation (Lemma 1)}
                \If{$\|r_i^{can}(t_{max}) - r_j^{com}(t_{max})\| < \delta + \epsilon$}{
                    \KwRet \textbf{false}\;
                }
            }
            \KwRet \textbf{true}\;
        \end{algorithm}
    \end{minipage}
\end{figure}

\subsubsection{Algorithm 2: Construct Candidate}
The goal of \cref{alg:construct_candidate} is to identify a valid switch time $t_s$ that allows the agent to transition from its nominal trajectory to the leader's trajectory, while maintaining safety.
The algorithm iterates through discrete times steps within the planning horizon (Line $2$).
At each potential switch time $t_s$, the agent attempts to compute a feasible "join" trajectory $(p_i^{join}, u_i^{join})$ that connects the agent's nominal $x_i^\nom(t_s)$ to some point on the leader's trajectory $p_L$ (Line $4$). 
If a feasible trajectory is found, the agent identifies the merge parameter $t_{Li}$ (point on the leader's path where the join occurrs).
It then constructs a full candidate (Line $7$) as Nominal -> Join -> Leader Path, and validates it agains static obstacles and other agents' committed trajectories using \cref{alg:validate_candidate} (Line $8$).
Finally, if any valid candidates were found, the candidate with the maximum switch time is returned (Lines $11$-$15$).

\subsubsection{Algorithm 3: Validate Candidate}
\Cref{alg:validate_candidate} provides the computational procedure for validating a candidate trajectory (\cref{def:valid}). Three conditions must be verified,
\begin{enumerate}
    \item \emph{Static Safety}: The candidate must remain in the safe set (Lines $1$-$3$).
    \item \emph{Transition Safety}: The candidate must avoid collisions with all other agents' committed trajectories during the transition period (Lines $4$-$7$).
    \item \emph{Backup Slot Safety}: Once the agent merges with the leader's trajectory, it must satisfy the separation conditions of \cref{lemma:valid_trajectory_backup_set} with respect to all other agents (Lines $8$-$9$).
\end{enumerate}
Note that line $8$ uses Euclidean distance as a conservative approximation of the arc length condition. Since the arc length is always greater than or equal to the Euclidean distance, ($d(a, b) \geq \|r_L(a) - r_L(b)\|$), satisfying the Euclidean condition is a computationally simpler sufficient condition for \cref{lemma:valid_trajectory_backup_set}

\subsubsection{Implementation Details}

In order to guarantee safety, we require that every agent is able to construct a valid committed trajectory at the initial time $t_0$. 
Each agent should construct their initial committed trajectory sequentially, with respect to the committed trajectories of all previous agents. If an agent is unable to construct a valid committed trajectory at time $t_0$, the algorithm is infeasible.

Note the use of euclidean distance in \cref{alg:validate_candidate} to verify sufficient separation along the leader's trajectory at the join time $t_{Li}$. 
This is a conservative approximation of the arc length condition in \cref{def:valid_trajectory_backup_set}, but is simpler to compute and suffices in practice. The arc length separation condition implies Euclidian separation if the maximum curvature of the path is bounded.

The function \textsc{PlanJoinToBackup}$(x_s, p_L)$ computes a dynamically feasible trajectory from state $x_s$ to some point on the leader's trajectory $p_L$, returning both the join trajectory and the time $t_{Li}$ at which the agent will reach the leader's trajectory. If no such trajectory can be found, it returns null.
In practice, we found that selecting multiple points along the leader's trajectory as potential targets for the join maneuver, and attempting to plan a trajectory to each point in sequence until one is found, works well.

We will now provide a formal proof of safety for the multi-agent formation control \cref{problem:multi_agent_formation_control}.
\begin{theorem}\label{theorem:safety} 
    Consider a set of $N$ agents satisfying \cref{assumption:communications} and \cref{assumption:no_simultaneous_iterations}. If
    \begin{enumerate}
        \item The leader trajectory $([t_0, t_f], p_L, u_L)$ satisfies \cref{def:valid_trajectory_backup_set}, with known $\epsilon \in \Rnonneg$ satisfying \cref{lemma:valid_trajectory_backup_set},
        \item All agents possess valid committed trajectories at time $t_0$, and update their committed trajectories according to multi-agent \gatekeeper{},
    \end{enumerate}
    Then, $\forall t \geq t_0$, all agents $i$ remain in $\Scal$, and maintain inter-agent safety $\norm{r_i(t) - r_j(t)} \geq \delta, \forall i \neq j$.
\end{theorem}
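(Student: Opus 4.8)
The plan is to prove the theorem by induction over the totally-ordered sequence of planning events, maintaining a global loop invariant that subsumes the desired conclusion. Concretely, I would define the invariant $I$: at every time $t \geq t_0$, each agent holds a committed trajectory that (a) stays in $\Scal$ for all future time, (b) reaches and remains in the leader's trajectory backup set, and (c) is pairwise at least $\delta$-separated from every other agent's committed trajectory for all future time. Because \cref{assumption:no_simultaneous_iterations} guarantees that planning iterations are non-overlapping and totally ordered in time, and that the state is effectively frozen during each iteration, the agents' executed states coincide with their committed trajectories. Hence establishing $I$ for all $t$ immediately yields $x_i(t) \in \Scal$ and $\norm{r_i(t) - r_j(t)} \geq \delta$, which is exactly the claim.

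For the base case, $I(t_0)$ follows from hypothesis 2 together with the sequential initialization: each agent's initial committed trajectory is valid in the sense of \cref{def:valid}, giving (a) and (b), while mutual separation (c) holds because each initial trajectory was certified against all previously committed ones and the norm is symmetric. For the inductive step, I would suppose $I$ holds just before the event time $t'$ at which a single agent $i$ plans. If $\mathcal{I}_{k_i,i} \neq \emptyset$, agent $i$ commits a valid candidate, which by \cref{def:valid} satisfies (a), (b), and is $\delta$-separated from every other current committed trajectory, establishing (c) for all pairs involving $i$; pairs not involving $i$ are untouched, so $I(t')$ holds. If $\mathcal{I}_{k_i,i} = \emptyset$, agent $i$ retains its previous committed trajectory, and properties (a) and (b) persist automatically because they are intrinsic to that fixed trajectory (the static obstacles and the leader path do not move).

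The main obstacle is this fallback case, namely showing that continuing the previous committed trajectory preserves inter-agent separation (c) even though other agents may have re-committed in the interim. I would resolve it using the symmetry of the pairwise collision check together with sequential consistency: every re-commit by an agent $j$ between agent $i$'s previous commit and $t'$ was validated against agent $i$'s committed trajectory, which is unchanged in this case, so that pair stays $\delta$-separated by condition 3 of \cref{def:valid}, and all unaffected pairs remain separated by the inductive hypothesis. A second delicate point, handled via \cref{lemma:valid_trajectory_backup_set}, is that all followers share the single leader path as a common backup set, so I must rule out collisions arising from simultaneous occupancy of that path. The backup-slot separation check performed during validation is precisely the arc-length hypothesis of \cref{lemma:valid_trajectory_backup_set}, which then guarantees post-merge separation for all future time; chaining the transition-phase check with this post-merge guarantee establishes (c) over the entire horizon, closing the induction and completing the proof.
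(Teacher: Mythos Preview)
Your proposal is correct and follows essentially the same inductive approach over the totally-ordered planning events as the paper's proof, with the identical two-case split (commit a new valid candidate vs.\ retain the previous one). Your treatment is in fact more careful: the paper's proof dispatches the fallback case by simply invoking the inductive hypothesis, whereas you explicitly argue via symmetry of the pairwise check and sequential consistency that the retained trajectory remains $\delta$-separated from other agents' possibly-updated commitments, and you also spell out how \cref{lemma:valid_trajectory_backup_set} closes the post-merge portion of condition (c).
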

\begin{proof}
    The proof is by induction.

    \textbf{Base Case:} At time $t_0$, by assumption, each agent $i$ has a valid committed trajectory $([t_{0}, \infty), p_{0,i}^\com, u_{0,i}^\com)$, and safety holds by hypothesis. 

    \textbf{Inductive Step:} 
    Assume agent $i$ is safe at iteration $k_i$. At iteration $k_i + 1$, it generates a candidate.
    \begin{enumerate}
        \item \emph{Case A:} The candidate is valid by \cref{def:valid}. By \cref{def:valid}, the candidate is safe with respect to static obstacles, reaches the backup set, and is collision free with respect to all other agents' committed trajectories.
        \item \emph{Case B:} The agent finds no valid candidate. It continues executing its previous committed trajectory, valid by the inductive hypothesis, and thus remains safe.
    \end{enumerate}
    
    Since the algorithm guarantees agents never switch to an invalid candidate, and the initial committed trajectory is valid, safety is preserved at each iteration. 
\end{proof}
\begin{remark}
    The safety guarantees in \cref{theorem:safety} rely on assumptions which require timely, and asynchronous sharing of committed trajectories among agents, and the existence of initial valid committed trajectories constructed sequentially at $t_0$.
    In practice, communications are subject to delays and packet drops, and initial construction may fail for large numbers of agents.
    We therefore treat the present results as a proof of concept that the trajectory backup set idea yields strong safety guarantees under these conditions, and addressing delayed/partial communications and decentralized initial allocation is important future work \cref{section:conclusion}.
\end{remark}

\section{Simulation Results}\label{section:simulation_results}
We simulate a team of agents modeled as 3D Dubins vehicles executing \cref{problem:multi_agent_formation_control} in a dense urban-like 3D environment. 
We demonstrate the efficacy of our approach through a series of randomized trials, comparing multi-agent \gatekeeper{} to baseline methods.
We consider 3D curvature- and pitch-angle constrained vehicles, which are useful models for fixed-wing aircraft, and reflect the dynamics of Dubins Airplanes \cite{valavanisHandbookUnmannedAerial2015,dubinsCurvesMinimalLength1957}.

\subsection{Setup}
Consider a team of $N_A$ agents, modeled as a 3D variant of the common unicycle model, with pitch, curvature and velocity constraints. This model is useful due to its compatibility with 3D Dubins paths.
\begin{equation}
\begin{aligned}
    s_i &= \begin{bmatrix}
        x\\
        y\\
        z\\
        \psi \\
        
    \end{bmatrix}, \;
    \dot s_i &= \begin{bmatrix}
         v_\text{max} \cos \psi \cos \gamma\\
         v_\text{max} \sin \psi \cos \gamma\\
         v_\text{max} \sin \gamma \\
         \omega\\
    \end{bmatrix},\;
    u = \begin{bmatrix}
        \omega \\ \gamma
    \end{bmatrix}, \;
    \begin{aligned}
        |\omega| &\leq \omega_{\text{max}}\\
        \gamma_{\min} &\leq \gamma \leq \gamma_{\max}
    \end{aligned}
\end{aligned}
\end{equation}


Using this model, we can compute the leader's path using a Dubins-based \rrts{} method \cite{linPathPlanningUsing2014}, specifying the turn radius for the Dubins primitives accordingly. We used \cite{vanaMinimal3DDubins2020} to generate Dubins curves in 3D which respect these pitch and curvature constraints. 

Nominal follower paths are offset by a fixed displacement vector $d_i^*$, transformed to the body frame of the leader aircraft at each point along the path. The resulting paths may not be dynamically feasible nor are they guaranteed to be safe.
\Cref{fig:3D_dubins_path} shows a leader trajectory generated using this method, and the nominal follower trajectories generated by offsetting the leader trajectory. 
We see that while the leader trajectory avoids obstacles and respects dynamics constraints, nominal follower trajectories intersect with obstacles and violate minimum curvature constraints impose by the dynamics model.
This method is scalable to many agents, as once the leader trajectory is generated offline, additional followers can be added online.

\begin{figure}
    \centering
    \includegraphics[width=0.7\linewidth]{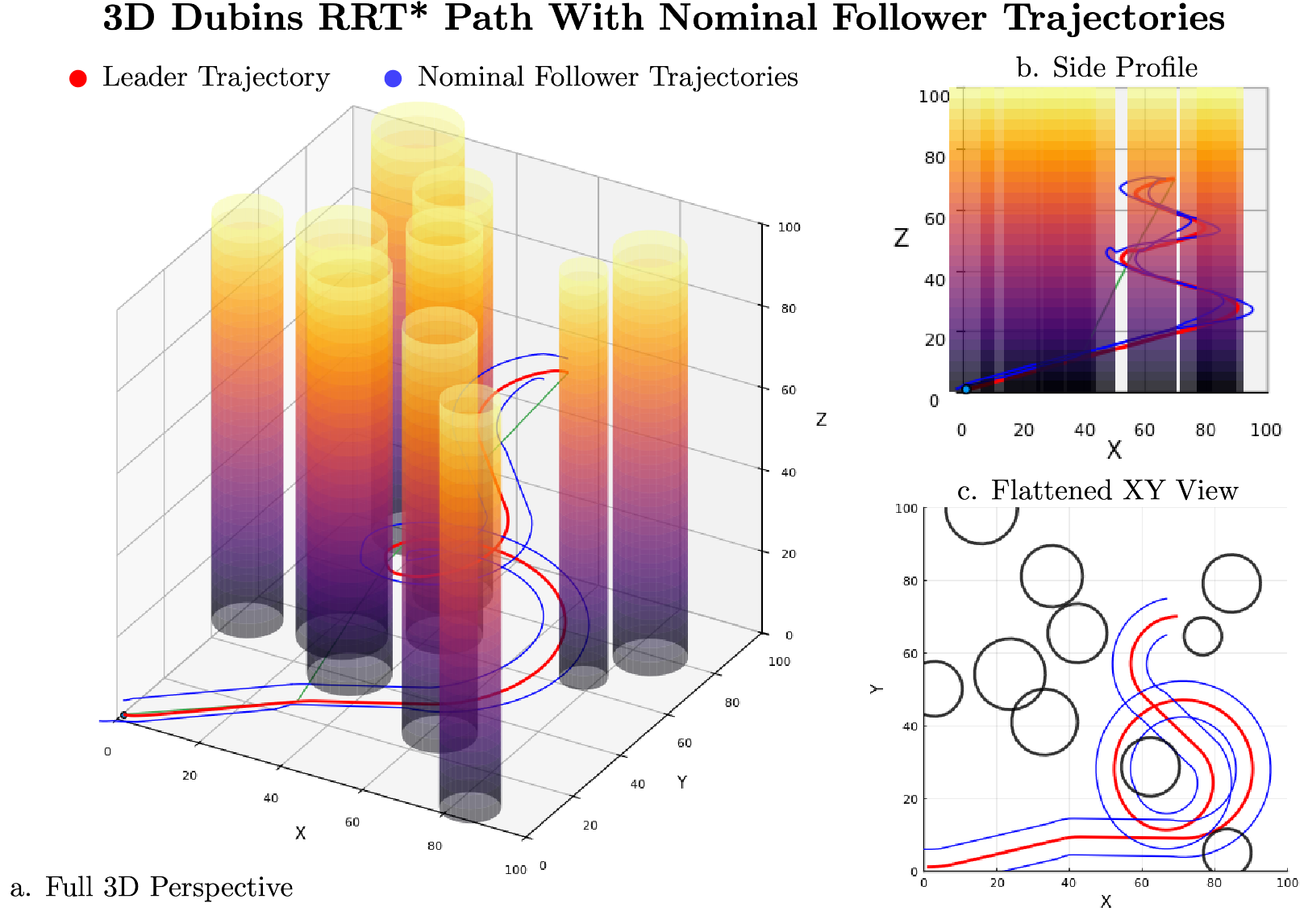}
    \caption{A leader trajectory (red) generated using Dubins RRT* in a 3D environment with cylindrical obstacles. (a) Full 3D perspective. (b) Side profile, demonstrating the path's adherence to pitch constraints. (c) Top-down XY projection. While the leader's path is safe and dynamically feasible, the nominal follower trajectories (blue) , created from a fixed offset , are shown intersecting with obstacles and violating dynamic constraints.}
    \label{fig:3D_dubins_path}
\end{figure}

\subsection{Simulation Results}
We perform simulations of one leader agent and two follower agents operating in randomly-generated environments. All simulations were performed in \texttt{julia}, running on a 2022 MacBook Air (Apple M2, 16GB).

The simulation environment is $100\times100\times100$ meters and contains $25$ randomly placed cylindrical obstacles with varying radii. The leader trajectory is generated from $(0, 0, 0)$ to $(100, 100, 70)$, with the start and end orientation fixed as level and towards positive $x$. Two follower agents are offset by $(-3.0, 5.0, 0.0)$ and $(-3.0, -5.0, 0.0)$ relative to the body frame of the leader. A minimum turning radius of $10$ and pitch-limits of $(-15^\circ, 20^\circ)$ are used.  An inter-agent collision distance $\delta = 1$ is used.
Each environment is generated by placing each cylinder at a random location, and sampling the radius uniformly between $2$ and $5$ units. The leader trajectory is then generated using the Dubins \rrts{} method described above.

\Cref{fig:3d_gk_solution_times} shows the evolution of a sample simulation over time. We see that at various points in time, agents rejoin the leader's path to avoid collision with obstacles, while still maintaining a safe distance. \Cref{fig:3d_gk_solution} provides a more detailed view of the same scenario, showing the full 3D trajectory, a 2D projection of the scenario, as well as the minimum interagent distance plotted over the course of the trial. We see that agents deviate as necessary to avoid obstacles while remaining safe with respect to other agents. Agents must also deviate to make tight turns that are dynamically infeasible, but overall remain close to the nominal trajectory

\begin{figure}
    \centering
    \includegraphics[width=0.8\linewidth]{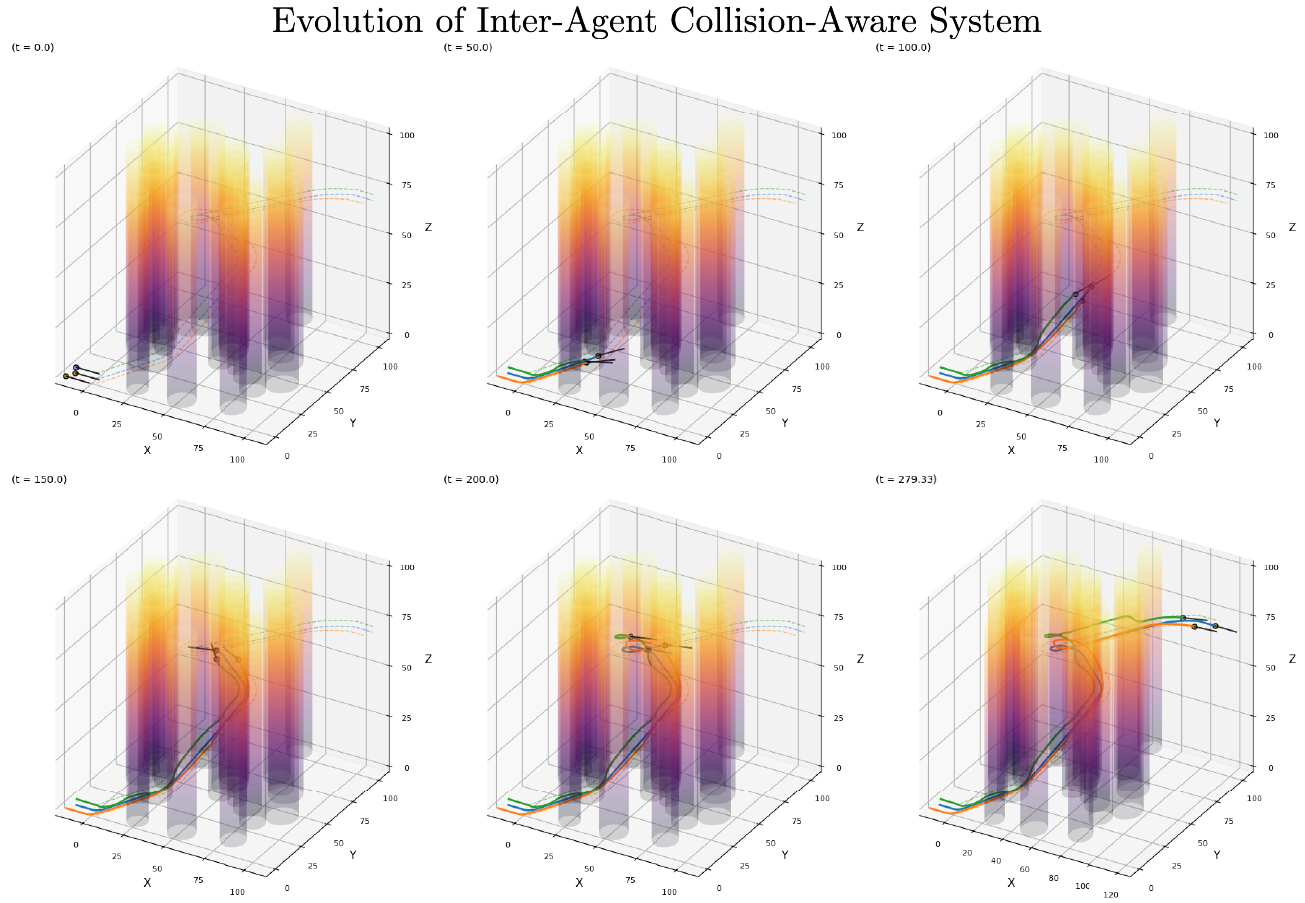}
    \caption{Evolution of a sample simulation over time, shown at six distinct time steps. The agents navigate a dense 3D environment with cylindrical obstacles. Agents can be seen deviating from their nominal formation and rejoining the leader's path to safely avoid collisions with obstacles.}
    \label{fig:3d_gk_solution_times}
\end{figure}

\begin{figure}[h!]
    \centering
    \includegraphics[width=0.95\linewidth]{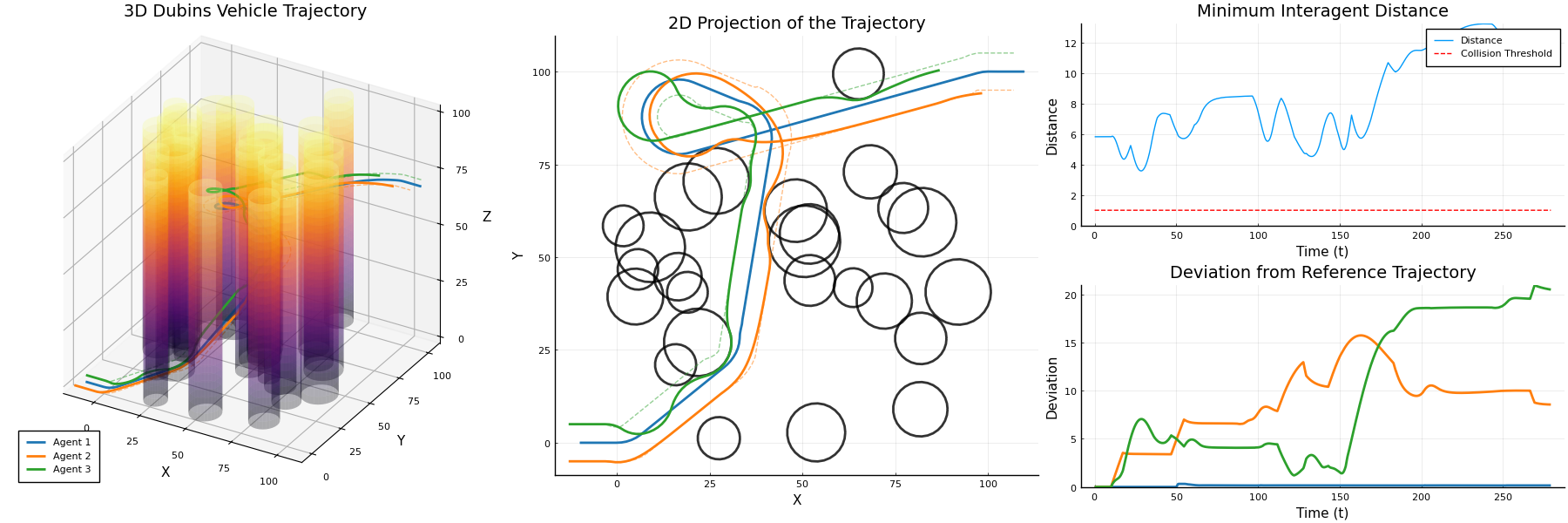}
    \caption{Trajectory and performance metrics for the simulation scenario shown in \cref{fig:3d_gk_solution_times}. (Left) Full 3D agent paths; (Center) 2D projection with static obstacles; (Top-Right) minimum inter-agent distance over time, which remains above the collision threshold; and (Bottom-Right) agent deviation from their nominal reference trajectories.}
    \label{fig:3d_gk_solution}
\end{figure}

\subsection{Baseline Comparison}

To validate our approach, we compare multi-agent \gatekeeper{} with baseline methods (CBF-QP and NMPC) by simulating each algorithm in 100 randomly-generated scenarios. We allow the baseline methods to replan at a frequency of $60$ Hz, as well as vary velocity between $[0.8, 1.0]$ to improve their ability to avoid collisions. The other simulation parameters remain unchanged. 
To ensure feasibility of the optimization-based solvers, in the event that the optimization problem becomes infeasible, we relax the inter-agent collision constraints by moving them to the cost function with a high penalty weight.
We use a formation tracking error metric defined as the average Euclidean distance between each follower and its nominal position relative to the leader, averaged over the duration of the trial. Success is defined as no agents incurring a collision over the duration of the trial. The results of the trials are given in \cref{tab:baseline_comparison}. We see that multi-agent \gatekeeper{} is able to successfully complete all $100$ trials without collision, while maintaining low formation tracking error.
The NMPC baseline achieves a moderate success rate, and with less formation error than multi-agent \gatekeeper{}. This highlights the tradeoff between safety and performance, as our approach prioritizes safety over formation tracking performance, and thus must deviate more frequently from the nominal trajectory.
The CBF-QP method struggled significantly with the highly nonlinear and dynamic collision avoidance constraints, resulting in very poor success rates, even with a high replanning frequency and planning horizon, as well as velocity modulation, as shown in \cref{tab:baseline_comparison}.  
Our method was the \textbf{only} method to achieve a 100\% success rate, demonstrating our superior safety guarantees. 

\begin{table}[h!]
    \centering
    \begin{tabular}{|ccc|}
        \hline 
        Algorithm & Success Rate &  Mean Formation Error \\
        \hline
        multi-agent \gatekeeper{} & \textbf{100\%} & 6.32\\
        CBF-QP & 22\% & 7.48\\
        NMPC & 68\% & \textbf{4.32}\\
        \hline
    \end{tabular}
    \caption{
        Restuls of the baseline comparison over 100 randomized trials.
        Each scenario was created by randomly generating a 3D environement with obstacles, and generating a leader trajectory using Dubins RRT*. The same leader trajectory was used across all algorithms for each scenario.
        Success Rate is the percentage of trials completed with no collisions. Mean Formation Error is the average deviation from the nominal reference trajectory.}
    \label{tab:baseline_comparison}
\end{table}
\section{Hardware Demonstration}\label{section:hardware_experiments}
To validate the physical feasibility of the complex, 3D trajectories generated by the multi-agent \gatekeeper{} algorithm, we executed a representative scenario on a team of Crazyflie 2.0 quadcopters.
\vspace{-0.5cm}
\subsection{Experiment Design}
The experiment's goal was to confirm that the safe trajectories, which require agents to dynamically break and reform their formation, satisfies safety constraints.
Due to the challenges of onboard, high-frequency replanning and information sharing onboard the quadcopters, the trajectories were pre-computed offline using the full multi-agent \gatekeeper{} algorithm in a simulated environment.
These trajectories were then used as a reference for the onboard low-level controller, which used state feedback from a Vicon motion capture system to track the path.

The quadcopters were controlled to emulate the 3D Dubins vehicle model used in simulation.
This was achieved by commanding the low-level controller to track the pre-computed path's position, velocity, and yaw, thereby respecting the pitch and curvature constraints of the simulated Dubins airplanes.

\subsection{Results}
We ran a scenario where three agents, starting in a triangular formation, were required to navigate through a narrow gate. As shown in \cref{fig:hardware_one}, the only safe solution requires the follower agents to identify the future collision risk, break their formation and follow the leader's path (their 'trajectory backup set') in a single-file line, and then re-establish the formation after clearing the obstacle.

This experiment was performed five times. In all trials, the quadcopters successfully executed the maneuver, passed through the gate without collision, and maintained the required safety separation.
This result validates that the trajectories produced by our algorithm are not just theoretically sound but are also dynamically feasible and trackable by physical robotic systems in a 3D environment.

\begin{figure}
    \centering
    \includegraphics[width=0.9\linewidth]{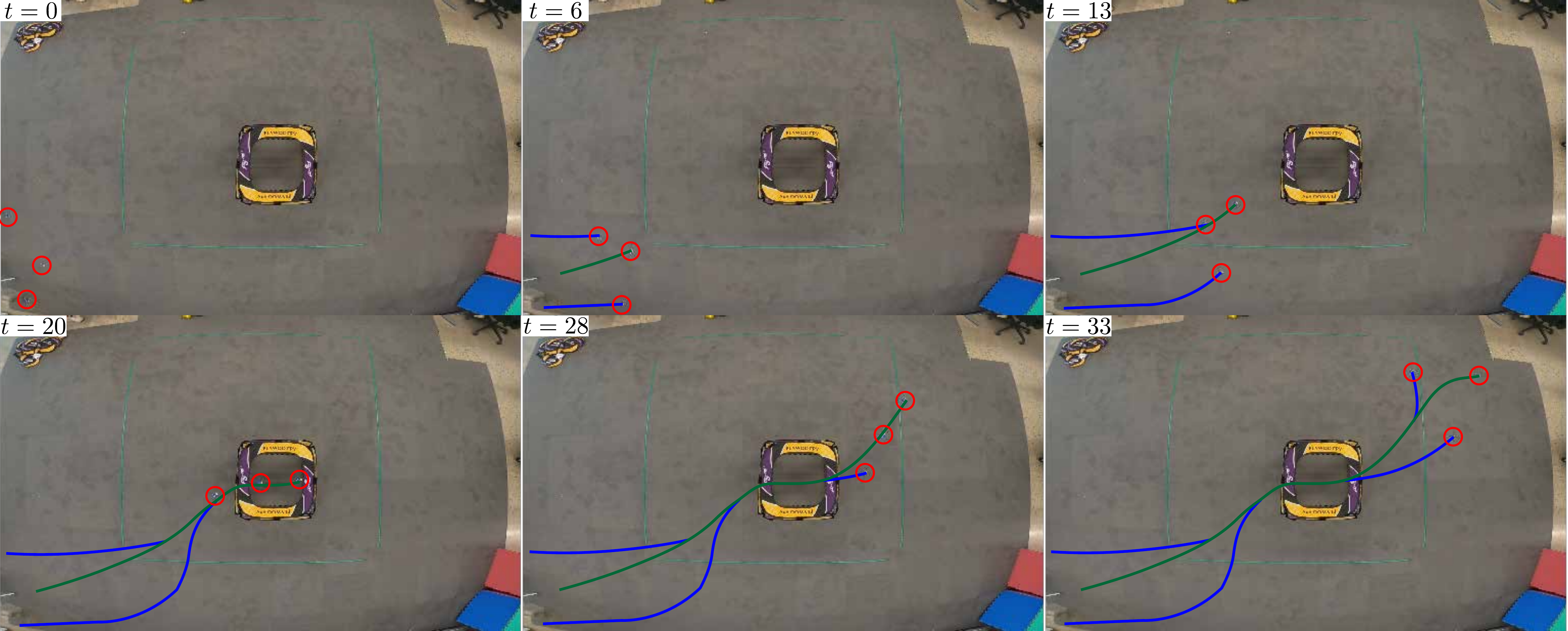}
    \caption{Hardware demonstration of multi-agent \gatekeeper{} with three Crazyflie 2.0 quadcopters navigating through a narrow gate.}
    \label{fig:hardware_one}
\end{figure}

\section{Conclusion}\label{section:conclusion}
This paper introduced multi-agent \gatekeeper{}, a novel, distributed safety framework for leader-follower formation control in complex 3D environments. Our approach extends the single-agent \gatekeeper{} safety verification algorithm to a multi-agent system, and marks its first application in a 3D environment.

The core of our method is a hybrid online-offline approach, where a leader follows a pre-computed safe path, and follower agents compute trajectories online. 
We introduced the "trajectory backup set," which leverages the leader's safe path as a guaranteed backup maneuver for all followers, allowing followers to flexibly maintain formation, but maintain a safe fallback option.
We then provided a rigorous proof that this approach ensures forward invariant safety under the assumption of ideal, delay-free communication and sequential decision making.

Empirically, we demonstrated the superior reliability of multi-agent \gatekeeper{} compared to baseline methods, maintaining 100\% safety complex 3D environments where CBF and NMPC methods frequently failed. Furthermore, we validated the approach on hardware with a team of quadcopters, successfully demonstrating the ability to break formation to pass through a narrow obstacle and reform afterward while still ensuring safety.

Limitations of the current work include the offline computation of trajectories for the hardware demonstration, the assumption of ideal, delay-free communications, and the centralized nature of the information sharing. Future work will investigate scalability and decentralization of this approach, and its robustness to communication delays and bandwidth constraints.
\bibliography{biblio}

\begin{thebibliography}{23}
\newcommand{\enquote}[1]{``#1''}
\providecommand{\natexlab}[1]{#1}
\providecommand{\url}[1]{\texttt{#1}}
\providecommand{\urlprefix}{URL }
\expandafter\ifx\csname urlstyle\endcsname\relax
  \providecommand{\doi}[1]{\discretionary{}{}{}https://doi.org/#1}\else
  \providecommand{\doi}[1]{\discretionary{}{}{}\urlstyle{rm}\url{https://doi.org/#1}}\fi

\bibitem[{Xia et~al.(2022)Xia, Du, Wang, Jiang, Ren, Li, and Han}]{xiaMultiAgentReinforcementLearning2022}
Xia, Z., Du, J., Wang, J., Jiang, C., Ren, Y., Li, G., and Han, Z., \enquote{Multi-{{Agent Reinforcement Learning Aided Intelligent UAV Swarm}} for {{Target Tracking}},} \emph{IEEE Transactions on Vehicular Technology}, Vol.~71, No.~1, 2022, pp. 931--945.
\newblock \doi{10.1109/TVT.2021.3129504}.

\bibitem[{Jo et~al.(2024)Jo, Lee, Yeom, and Han}]{joFoXFormationAwareExploration2024}
Jo, Y., Lee, S., Yeom, J., and Han, S., \enquote{{{FoX}}: {{Formation-Aware Exploration}} in {{Multi-Agent Reinforcement Learning}},} \emph{Proceedings of the AAAI Conference on Artificial Intelligence}, Vol.~38, No.~12, 2024, pp. 12985--12994.
\newblock \doi{10.1609/aaai.v38i12.29196}.

\bibitem[{Afrazi et~al.(2025)Afrazi, Seo, and Lee}]{afraziDensityDrivenFormationControl2025}
Afrazi, M., Seo, S., and Lee, K., \enquote{Density-{{Driven Formation Control}} of a {{Multi-Agent System}} with an {{Application}} to {{Search-and-Rescue Missions}},} \emph{2025 {{American Control Conference}} ({{ACC}})}, IEEE, Denver, CO, USA, 2025, pp. 3622--3627.
\newblock \doi{10.23919/ACC63710.2025.11107863}.

\bibitem[{Lee and Chwa(2018)}]{leeDecentralizedBehaviorbasedFormation2018}
Lee, G., and Chwa, D., \enquote{Decentralized Behavior-Based Formation Control of Multiple Robots Considering Obstacle Avoidance,} \emph{Intelligent Service Robotics}, Vol.~11, No.~1, 2018, pp. 127--138.
\newblock \doi{10.1007/s11370-017-0240-y}.

\bibitem[{Barfoot and Clark(2004)}]{barfootMotionPlanningFormations2004}
Barfoot, T., and Clark, C., \enquote{Motion Planning for Formations of Mobile Robots,} \emph{Robotics and Autonomous Systems}, Vol.~46, No.~2, 2004, pp. 65--78.
\newblock \doi{10.1016/j.robot.2003.11.004}.

\bibitem[{Rold{\~a}o et~al.(2014)Rold{\~a}o, Cunha, Cabecinhas, Silvestre, and Oliveira}]{roldaoLeaderfollowingTrajectoryGenerator2014}
Rold{\~a}o, V., Cunha, R., Cabecinhas, D., Silvestre, C., and Oliveira, P., \enquote{A Leader-Following Trajectory Generator with Application to Quadrotor Formation Flight,} \emph{Robotics and Autonomous Systems}, Vol.~62, No.~10, 2014, pp. 1597--1609.
\newblock \doi{10.1016/j.robot.2014.05.002}.

\bibitem[{Xiao and Chen(2019)}]{xiaoLeaderFollowerConsensusMultiRobot2019}
Xiao, H., and Chen, C. L.~P., \enquote{Leader-{{Follower Consensus Multi-Robot Formation Control Using Neurodynamic-Optimization-Based Nonlinear Model Predictive Control}},} \emph{IEEE Access}, Vol.~7, 2019, pp. 43581--43590.
\newblock \doi{10.1109/ACCESS.2019.2907960}.

\bibitem[{Liu and Bucknall(2018)}]{liuSurveyFormationControl2018}
Liu, Y., and Bucknall, R., \enquote{A Survey of Formation Control and Motion Planning of Multiple Unmanned Vehicles,} \emph{Robotica}, Vol.~36, No.~7, 2018, pp. 1019--1047.
\newblock \doi{10.1017/S0263574718000218}.

\bibitem[{Wang et~al.(2020)Wang, Shan, Yue, and Wang}]{wangVisionBasedFlexibleLeader2020}
Wang, Y., Shan, M., Yue, Y., and Wang, D., \enquote{Vision-{{Based Flexible Leader}}--{{Follower Formation Tracking}} of {{Multiple Nonholonomic Mobile Robots}} in {{Unknown Obstacle Environments}},} \emph{IEEE Transactions on Control Systems Technology}, Vol.~28, No.~3, 2020, pp. 1025--1033.
\newblock \doi{10.1109/TCST.2019.2892031}.

\bibitem[{{Kar-Han Tan} and Lewis(1996)}]{kar-hantanVirtualStructuresHighprecision1996}
{Kar-Han Tan}, and Lewis, M., \enquote{Virtual Structures for High-Precision Cooperative Mobile Robotic Control,} \emph{Proceedings of {{IEEE}}/{{RSJ International Conference}} on {{Intelligent Robots}} and {{Systems}}. {{IROS}} '96}, Vol.~1, IEEE, Osaka, Japan, 1996, pp. 132--139.
\newblock \doi{10.1109/IROS.1996.570643}.

\bibitem[{Askari et~al.(2015)Askari, Mortazavi, and Talebi}]{askariUAVFormationControl2015}
Askari, A., Mortazavi, M., and Talebi, H.~A., \enquote{{{UAV Formation Control}} via the {{Virtual Structure Approach}},} \emph{Journal of Aerospace Engineering}, Vol.~28, No.~1, 2015, p. 04014047.
\newblock \doi{10.1061/(ASCE)AS.1943-5525.0000351}.

\bibitem[{Zhou et~al.(2018)Zhou, Wang, and Schwager}]{zhouAgileCoordinationAssistive2018}
Zhou, D., Wang, Z., and Schwager, M., \enquote{Agile {{Coordination}} and {{Assistive Collision Avoidance}} for {{Quadrotor Swarms Using Virtual Structures}},} \emph{IEEE Transactions on Robotics}, Vol.~34, No.~4, 2018, pp. 916--923.
\newblock \doi{10.1109/TRO.2018.2857477}.

\bibitem[{Oh et~al.(2015)Oh, Park, and Ahn}]{OH2015424}
Oh, K.-K., Park, M.-C., and Ahn, H.-S., \enquote{A Survey of Multi-Agent Formation Control,} \emph{Automatica}, Vol.~53, 2015, pp. 424--440.
\newblock \doi{10.1016/j.automatica.2014.10.022}.

\bibitem[{Balch and Arkin(1998)}]{balchBehaviorbasedFormationControl1998}
Balch, T., and Arkin, R., \enquote{Behavior-Based Formation Control for Multirobot Teams,} \emph{IEEE Transactions on Robotics and Automation}, Vol.~14, No.~6, 1998, pp. 926--939.
\newblock \doi{10.1109/70.736776}.

\bibitem[{{\v C}{\'a}p et~al.(2013){\v C}{\'a}p, Nov{\'a}k, Vok{\v r}{\'i}nek, and P{\v e}chou{\v c}ek}]{capMultiagentRRTSamplingbased2013}
{\v C}{\'a}p, M., Nov{\'a}k, P., Vok{\v r}{\'i}nek, J., and P{\v e}chou{\v c}ek, M., \enquote{Multi-Agent {{RRT}}*: {{Sampling-based Cooperative Pathfinding}} ({{Extended Abstract}}),} , 2013.
\newblock \doi{10.48550/ARXIV.1302.2828}.

\bibitem[{Zhou et~al.(2022)Zhou, Wang, Wang, Dong, and {School of Electrical and Control Engineering, North China University of Technology No.5 Jinyuanzhuang Road, Shijingshan District, Beijing 100144, China}}]{zhouHybridPathPlanning2022}
Zhou, M., Wang, Z., Wang, J., Dong, Z., and {School of Electrical and Control Engineering, North China University of Technology No.5 Jinyuanzhuang Road, Shijingshan District, Beijing 100144, China}, \enquote{A {{Hybrid Path Planning}} and {{Formation Control Strategy}} of {{Multi-Robots}} in a {{Dynamic Environment}},} \emph{Journal of Advanced Computational Intelligence and Intelligent Informatics}, Vol.~26, No.~3, 2022, pp. 342--354.
\newblock \doi{10.20965/jaciii.2022.p0342}.

\bibitem[{Agrawal et~al.(2024)Agrawal, Chen, and Panagou}]{agrawalGatekeeperOnlineSafety2024}
Agrawal, D.~R., Chen, R., and Panagou, D., \enquote{Gatekeeper: {{Online Safety Verification}} and {{Control}} for {{Nonlinear Systems}} in {{Dynamic Environments}},} \emph{IEEE Transactions on Robotics}, Vol.~40, 2024, pp. 4358--4375.
\newblock \doi{10.1109/TRO.2024.3454415}.

\bibitem[{Agrawal and Panagou(2025)}]{agrawalOnlineSafetyMultiple2025}
Agrawal, D.~R., and Panagou, D., \enquote{Online {{Safety Under Multiple Constraints}} and {{Input Bounds Using}} Gatekeeper: {{Theory}} and {{Applications}},} \emph{IEEE Control Systems Letters}, Vol.~9, 2025, pp. 2309--2314.
\newblock \doi{10.1109/LCSYS.2025.3599718}.

\bibitem[{Tordesillas and How(2022)}]{tordesillasMADERTrajectoryPlanner2022}
Tordesillas, J., and How, J.~P., \enquote{{{MADER}}: {{Trajectory Planner}} in {{Multiagent}} and {{Dynamic Environments}},} \emph{IEEE Transactions on Robotics}, Vol.~38, No.~1, 2022, pp. 463--476.
\newblock \doi{10.1109/TRO.2021.3080235}.

\bibitem[{Valavanis and Vachtsevanos(2015)}]{valavanisHandbookUnmannedAerial2015}
Valavanis, K.~P., and Vachtsevanos, G.~J. (eds.), \emph{Handbook of {{Unmanned Aerial Vehicles}}}, Springer Netherlands, Dordrecht, 2015.
\newblock \doi{10.1007/978-90-481-9707-1}.

\bibitem[{Dubins(1957)}]{dubinsCurvesMinimalLength1957}
Dubins, L.~E., \enquote{On {{Curves}} of {{Minimal Length}} with a {{Constraint}} on {{Average Curvature}}, and with {{Prescribed Initial}} and {{Terminal Positions}} and {{Tangents}},} \emph{American Journal of Mathematics}, Vol.~79, No.~3, 1957, p. 497.
\newblock \doi{10.2307/2372560}.

\bibitem[{Lin and Saripalli(2014)}]{linPathPlanningUsing2014}
Lin, Y., and Saripalli, S., \enquote{Path Planning Using {{3D Dubins Curve}} for {{Unmanned Aerial Vehicles}},} \emph{2014 {{International Conference}} on {{Unmanned Aircraft Systems}} ({{ICUAS}})}, IEEE, Orlando, FL, USA, 2014, pp. 296--304.
\newblock \doi{10.1109/icuas.2014.6842268}.

\bibitem[{Vana et~al.(2020)Vana, Alves~Neto, Faigl, and Macharet}]{vanaMinimal3DDubins2020}
Vana, P., Alves~Neto, A., Faigl, J., and Macharet, D.~G., \enquote{Minimal {{3D Dubins Path}} with {{Bounded Curvature}} and {{Pitch Angle}},} \emph{2020 {{IEEE International Conference}} on {{Robotics}} and {{Automation}} ({{ICRA}})}, IEEE, Paris, France, 2020.
\newblock \doi{10.1109/icra40945.2020.9197084}.

\end{thebibliography}

\end{document}